\documentclass[11pt]{article}

\usepackage[preprint]{acl}

\usepackage{times}
\usepackage{latexsym}

\usepackage[T1]{fontenc}
\usepackage[utf8]{inputenc}
\usepackage{microtype}
\usepackage{inconsolata}
\usepackage{graphicx}
\usepackage{booktabs}
\usepackage{amsfonts}
\usepackage{amsmath}
\usepackage{amssymb}
\usepackage{nicefrac}
\usepackage{amsthm}
\usepackage{enumitem}
\usepackage{subcaption}

\newtheorem{proposition}{Proposition}
\newtheorem{lemma}[proposition]{Lemma}

\newcommand{\Var}{\operatorname{Var}}
\newcommand{\E}{\operatorname{E}}
\newcommand{\Cov}{\operatorname{Cov}}
\newcommand{\MS}{\mathrm{MS}}
\newcommand{\Ktot}{K_{\text{tot}}}

\title{CyclicJudge: Mitigating Judge Bias Efficiently in LLM-based Evaluation}

\author{%
  \textbf{Ziyi Zhu\textsuperscript{1}, Olivier Tieleman\textsuperscript{1}, Alexey Bukhtiyarov\textsuperscript{1}, Jinghong Chen\textsuperscript{2}}\\
  \textsuperscript{1}Slingshot AI \quad \textsuperscript{2}Department of Engineering, University of Cambridge\\
  \small{
    \textbf{Correspondence:} \href{mailto:ziyi@slingshotai.com}{ziyi@slingshotai.com}
  }
}

\begin{document}

\maketitle

\begin{abstract}
LLM-as-judge evaluation has become standard practice for open-ended model assessment; however, judges exhibit systematic biases that cannot be averaged out by increasing the number of scenarios or generations. These biases are often similar in magnitude to the model differences that benchmarks are designed to detect, resulting in unreliable rankings when single-judge evaluations are used. We introduce a variance decomposition that partitions benchmark score variance into scenario, generation, judge, and residual components. Based on this analysis, \textbf{CyclicJudge}, a round-robin assignment of judges to scenarios, is demonstrated to be the optimal strategy for a fixed judge panel and judge-call budget: the score recovers the panel mean exactly while matching the cost of single-judge evaluation. Empirical results on MT-Bench and MindEval validate the effectiveness of CyclicJudge as predicted, across both general-purpose and domain-specific evaluation settings.
\end{abstract}

\section{Introduction}

LLM-as-judge evaluation has become the de facto standard for open-ended model assessment \citep{zheng2023judging}. A growing body of work indicates that LLM judges are systematically biased: some models are consistently lenient, others strict, with additional position, verbosity, and self-preference effects \citep{zheng2023judging, shi2024judging, wang2023large}. A standard Analysis of Variance (ANOVA; \citealp{fisher1925statistical}) on our data confirms that judge main effect is highly significant (\S\ref{sec:anova}).

Crucially, systematic bias does not average away across more scenarios or generations: reducing them requires aggregating across multiple judges. Yet running every judge on every item multiplies evaluation cost by the panel size and sacrifices generation diversity at a fixed budget. Throughout this work, we treat the mean score of a diverse $\Ktot$-judge panel as the reference for evaluation; our goal is to recover this panel mean efficiently.

We show that \textbf{CyclicJudge}, a round-robin assignment of judges to items, resolves this tension. To establish this formally, we develop a variance decomposition rooted in generalizability theory \citep{brennan2001generalizability, shavelson1991generalizability} that separates benchmark score variance into scenario, generation, judge, and residual components. Our contributions are:
\begin{enumerate}[leftmargin=*,itemsep=2pt]
    \item A mixed-effects model that decomposes benchmark variance into random noise and panel-relative judge offsets, showing they require fundamentally different remedies (\S\ref{sec:model}--\ref{sec:variance}).
    \item A proof that round-robin cycling achieves lower variance than alternative strategies: it cancels the panel-relative offsets exactly while maximizing generation diversity at any budget (\S\ref{sec:allocation}).
    \item Empirical results on MT-Bench and MindEval validating the effectiveness of CyclicJudge as predicted, across both general-purpose and domain-specific evaluation settings (\S\ref{sec:experiments}).
\end{enumerate}

\section{Related Work}

\paragraph{LLM judge biases and multi-judge approaches.} LLM judges exhibit systematic position bias \citep{wang2023large, shi2024judging}, length bias \citep{dubois2024length}, self-preference \citep{wataoka2024self}, and cognitive biases \citep{koo2023benchmarking}; we refer readers to \citet{gu2024survey} for a comprehensive survey. Multi-judge panels mitigate these biases by averaging over diverse judge tendencies \citep{verga2024replacing}, while psychometric reliability analyses further demonstrate that single-judge evaluations yield insufficiently stable scores \citep{schroeder2024trust}.

\paragraph{Benchmark reliability and generation variance.} \citet{madaan2024quantifying} empirically quantifies variance across hundreds of models and benchmarks, and \citet{alvarado2025repetitions} shows via ANOVA that unrepeated evaluations frequently produce rank inversions across benchmark subcategories. \citet{zhang2025beyond} proposes a hierarchical model decomposing variance into within-prompt and between-prompt components, and \citet{mizrahi2024state} shows that prompt paraphrasing alone can reverse model rankings. Our work extends this line by further separating systematic judge bias from random noise.

\paragraph{Budget-constrained evaluation.} \citet{saha2026budget} frames judge selection as a multi-armed bandit problem, \citet{luettgau2025hibayes} applies hierarchical Bayesian partial pooling in low-data regimes, and \citet{miller2024error} derives standard errors and power formulas for benchmark scores; \citet{dorner2024limits} further establishes fundamental scaling limits on the accuracy gains from additional judge queries.

\section{Methodology}

\subsection{Model Specification}
\label{sec:model}

Consider evaluating model $\theta$ on $n$ scenarios from a dataset $\mathcal{D}$ of $N$ total scenarios, with $m$ independent generations per scenario, each scored by $K$ judges drawn from a pool of $\Ktot$. The same $K$ judges score every scenario--generation cell (a crossed design). We model each score as:
\begin{equation}
\label{eq:model}
X_{ij\ell} = \mu_\theta + \alpha_i + \beta_{ij} + \gamma_\ell + \varepsilon_{ij\ell}
\end{equation}
where $i \in \{1,\ldots,n\}$ indexes scenarios, $j \in \{1,\ldots,m\}$ indexes generations within scenario $i$, and $\ell$ indexes the $K$ selected judges. Each term has a distinct source:
\begin{itemize}[leftmargin=*,itemsep=2pt]
\item $\mu_\theta$: \emph{panel-mean estimand} for model $\theta$---the average score the $\Ktot$-judge panel would assign in expectation, with random noise removed.
\item $\alpha_i \overset{\text{iid}}{\sim} (0,\,\sigma_\alpha^2)$: scenario effect---how much individual scenario difficulty varies for model $\theta$. A dataset whose scenarios all test the same aspects of model $\theta$ equally has $\sigma_\alpha^2 = 0$.
\item $\beta_{ij} \overset{\text{iid}}{\sim} (0,\,\sigma_\beta^2)$: generation effect---variability from stochastic decoding. $\sigma_\beta^2$ is model-specific; a deterministic model has $\sigma_\beta^2 = 0$.
\item $\gamma_\ell$: \emph{judge offset relative to the panel mean}---a fixed constant for each judge, with centering constraint $\sum_{\ell=1}^{\Ktot}\gamma_\ell = 0$ and population dispersion $\sigma_\gamma^2 = \Ktot^{-1}\sum_\ell \gamma_\ell^2$. $\gamma_\ell$ measures deviation from the panel reference, not from absolute truth: bias shared by all panel members is absorbed into $\mu_\theta$.
\item $\varepsilon_{ij\ell} \overset{\text{iid}}{\sim} (0,\,\sigma_\varepsilon^2)$: residual---captures judge-level noise and all interaction terms; we discuss these modeling choices in Appendix~\ref{app:interactions}.
\end{itemize}
The random effects $(\alpha, \beta, \varepsilon)$ are mutually independent with finite second moments; no distributional assumptions (e.g.\ Gaussianity) are needed. The $n$ scenarios and $K$ judges are each drawn by simple random sampling without replacement, independently of each other and of all random effects.

\subsection{Variance Decomposition}
\label{sec:variance}

The benchmark score is the grand mean $\bar{X} = \frac{1}{nmK}\sum_{i,j,\ell} X_{ij\ell}$, which decomposes as $\bar{X} = \mu_\theta + \bar{\alpha} + \bar{\beta} + \bar{\gamma} + \bar{\varepsilon}$, where each bar denotes a hierarchical sample mean.

\begin{proposition}[Variance decomposition]
\label{prop:variance}
\begin{equation}
\label{eq:variance}
\Var(\bar{X}) = \underbrace{\frac{\sigma_\alpha^2}{n} + \frac{\sigma_\beta^2}{nm} + \frac{\sigma_\varepsilon^2}{nmK}}_{\text{random noise}} + \underbrace{\frac{\sigma_\gamma^2}{K}\cdot\frac{\Ktot - K}{\Ktot - 1}}_{V_\gamma\text{: judge bias}}
\end{equation}
\end{proposition}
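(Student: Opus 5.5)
The argument rests on the decomposition $\bar{X} = \mu_\theta + \bar{\alpha} + \bar{\beta} + \bar{\gamma} + \bar{\varepsilon}$. I would show that the four random averages are mutually uncorrelated, and then compute the variance of each one separately. Here $\bar{\alpha} = \frac1n\sum_i \alpha_i$, $\bar{\beta} = \frac{1}{nm}\sum_{i,j}\beta_{ij}$, $\bar{\varepsilon} = \frac{1}{nmK}\sum_{i,j,\ell}\varepsilon_{ij\ell}$, and $\bar{\gamma} = \frac1K\sum_{\ell\in S}\gamma_\ell$, where $S$ is the random $K$-subset of judges.

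\textbf{Step 1 (cross terms vanish).} The effects $\alpha,\beta,\varepsilon$ are mutually independent, have mean zero, and are independent of the sampling of scenarios and judges. To handle the selection of $S$, I would condition on $S$ and on the sampled scenario indices. Under this conditioning, $\bar{\gamma}$ is a constant, while $\bar{\alpha},\bar{\beta},\bar{\varepsilon}$ are still averages of independent mean-zero variables. Therefore every conditional covariance is zero. Each conditional mean of $\bar{\alpha},\bar{\beta},\bar{\varepsilon}$ is also zero, so by the law of total covariance all the unconditional cross covariances vanish as well.

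\textbf{Step 2 (random-noise terms).} Each of these is an average of iid mean-zero variables. $\bar{\alpha}$ averages $n$ terms, giving variance $\sigma_\alpha^2/n$. $\bar{\beta}$ averages $nm$ terms, giving $\sigma_\beta^2/(nm)$. $\bar{\varepsilon}$ averages $nmK$ terms, giving $\sigma_\varepsilon^2/(nmK)$. Because the conditional variances do not depend on which scenarios or judges were drawn, they equal the unconditional variances.

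\textbf{Step 3 (judge term).} This is the one step with real content. Here $\bar{\gamma}$ is the mean of a simple random sample without replacement of size $K$ from the finite population $\{\gamma_1,\dots,\gamma_{\Ktot}\}$. That population has mean $0$ by the centering constraint and population variance $\sigma_\gamma^2 = \Ktot^{-1}\sum_\ell\gamma_\ell^2$. I would write $\bar{\gamma} = \frac1K\sum_{\ell=1}^{\Ktot} I_\ell\gamma_\ell$, where $I_\ell$ indicates that judge $\ell$ is in $S$. The indicators satisfy
\[
\Var(I_\ell)=\frac{K}{\Ktot}\Bigl(1-\frac{K}{\Ktot}\Bigr), \qquad \Cov(I_\ell,I_{\ell'})=\frac{K(K-1)}{\Ktot(\Ktot-1)}-\frac{K^2}{\Ktot^2}\quad (\ell\neq\ell').
\]
Expanding the variance and using $\sum_{\ell\neq\ell'}\gamma_\ell\gamma_{\ell'} = \bigl(\sum_\ell\gamma_\ell\bigr)^2-\sum_\ell\gamma_\ell^2 = -\Ktot\sigma_\gamma^2$ collapses the expression to
\[
\frac{\sigma_\gamma^2}{K}\cdot\frac{\Ktot-K}{\Ktot-1}.
\]
This is the standard finite-population correction. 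The main obstacle is getting the indicator-covariance bookkeeping right; the centering constraint is exactly what makes the cross sum simplify.

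Summing the four variances gives the stated formula.
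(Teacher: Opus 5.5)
Your proof is correct and follows essentially the paper's route: the paper conditions on the judge selection and applies the law of total variance, so the noise terms appear as the conditional variance and $\Var(\bar{\gamma})$ as the variance of the conditional mean. That is equivalent to your vanishing-cross-covariance argument, and the paper then proves the finite-population-correction lemma with the same indicator-covariance computation and the same use of the centering constraint.
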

Each noise term shrinks with more data at or above its hierarchy level. The bias term $V_\gamma$ responds only to $K$ and vanishes exactly at $K = \Ktot$. The proof uses the law of total variance, conditioning on the judge selection, combined with a finite population correction for sampling $K$ from $\Ktot$ centered bias values (Appendix~\ref{app:derivation}).

\subsection{Allocation Strategies}
\label{sec:allocation}

Given a per-scenario budget of $B$ judge calls, the design question is how to allocate judges across generations. The scenario term $\sigma_\alpha^2/n$ is fixed once scenarios are chosen; we compare three strategies for the remaining variance.

\paragraph{Strategy A: All judges per generation.} Use all $\Ktot$ judges for each of $m = B/\Ktot$ generations. The mean of all judge offsets is zero ($\bar{\gamma} = 0$):
\begin{equation}
\label{eq:strat_all}
V_A(B) = \frac{\Ktot\,\sigma_\beta^2 + \sigma_\varepsilon^2}{nB}
\end{equation}

\paragraph{Strategy B: Random single judge per generation.} Use $m = B$ generations, each scored by one randomly sampled judge. Random judge selection injects the panel-relative offset as additional sampling noise:
\begin{equation}
\label{eq:strat_rand}
V_B(B) = \frac{\sigma_\beta^2 + \sigma_\gamma^2 + \sigma_\varepsilon^2}{nB}
\end{equation}

\paragraph{Strategy C: CyclicJudge (round-robin).} Use $m = B$ generations, assigning judge $j \bmod \Ktot$ to generation $j$. Each judge appears equally often, so the panel-relative offsets sum to zero ($\bar{\gamma} = 0$):
\begin{equation}
\label{eq:strat_cycle}
V_C(B) = \frac{\sigma_\beta^2 + \sigma_\varepsilon^2}{nB}
\end{equation}
With one generation per scenario ($m = 1$), the same variance and offset cancellation is achieved by cycling judges across scenarios instead. In this case scenarios should be shuffled before assignment to avoid confounding the cycle with any latent ordering of the dataset.

\paragraph{Advantage of CyclicJudge.} Direct comparison (Appendix~\ref{app:allocation}) shows $V_C \leq \min(V_A, V_B)$ for all $B$, with gaps of $O(1/B)$, so CyclicJudge's advantage is largest at small budgets of judge calls. The ranking of A vs.\ B depends on whether $(\Ktot{-}1)\sigma_\beta^2 \gtrless \sigma_\gamma^2$: all judges beat random only when judge bias is large relative to generation variance.

\section{Experiments}
\label{sec:experiments}

\subsection{Domain and Data}

We validate our approach on two benchmarks:

\paragraph{MT-Bench} \citep{zheng2023judging} is a general-purpose multi-turn conversational benchmark with 80 two-turn questions spanning 8 categories, scored on a 1--10 scale per turn (averaged).

\paragraph{MindEval} \citep{pombal2025mindeval} is a domain-specific benchmark for multi-turn mental health support with 50 scenarios and composite scores on a 1--5 scale covering empathy, safety, and clinical appropriateness. Each scenario simulates a full 10-turn therapeutic session with a synthetic patient.

For both benchmarks, we evaluate five models---Qwen 2.5 7B Instruct \citep{yang2024qwen25}, Llama 3.3 70B Instruct \citep{grattafiori2024llama}, GPT-5.2, Gemini 3 Flash, and Claude Sonnet 4.6, which also serve as judges in a symmetric design. MT-Bench uses $m{=}10$ generations per scenario (4{,}000 scores per model); MindEval uses $m{=}5$ (1{,}250 scores per model). We follow the default judge decoding hyperparameters from reference (Appendix~\ref{app:hyperparameters}).

\subsection{Judge Bias Estimates}
\label{sec:anova}

As a complementary omnibus check on the mixed-effects decomposition (\S\ref{sec:model}), we test whether judge identity has a significant effect using a two-way ANOVA, treating each scenario--generation pair as a subject crossed with $K$ judges. On both benchmarks, the judge main effect is highly significant ($p < 0.001$ in all cases; see Appendix~\ref{app:anova}).

Table~\ref{tab:mean_scores} demonstrates that \textbf{single-judge evaluation produces unreliable rankings}. On MT-Bench at the default operating point ($m{=}1$, $K{=}1$), the top-ranked model changes with every judge, with Qwen ranking itself first despite being last by consensus, a pattern consistent with the self-preference bias documented by \citet{wataoka2024self}. The per-observation variance estimates (Table~\ref{tab:per_obs}) confirm this quantitatively: judge variance $\hat{\sigma}^2_\gamma$ ranges from 0.34 to 0.95 across models. Since competitive models on current leaderboards are routinely separated by $< 0.5$ points \citep{chiang2024chatbot}, single-judge evaluation can easily reverse rankings. On MindEval, rankings are also notably more stable across single judges: four of five produce an identical ordering. Two plausible factors include: domain-specific rubrics may leave less room for stylistic self-preference, and the markedly lower scenario and generation variance on MindEval (\S\ref{sec:variance_components}) compresses the absolute size of any model-dependent fluctuations.

\begin{table}[t]
\centering
\caption{Mean score $\pm$ 95\% CI at the default operating point ($m{=}1$, $K{=}1$). \textbf{Bold}: highest score per judge column. \underline{Underline}: second highest.}
\label{tab:mean_scores}
\smallskip
{\small (a) MT-Bench (1--10 scale, $n{=}80$)}
\vspace{2pt}

\resizebox{\columnwidth}{!}{%
\begin{tabular}{@{}lccccc@{}}
\toprule
& \multicolumn{5}{c}{\textbf{Judge}} \\
\cmidrule(l){2-6}
\textbf{Model} & Qwen & Llama & GPT & Gemini & Claude \\
\midrule
Gemini & \underline{8.02$\pm$.16} & \underline{8.98$\pm$.08} & \underline{8.20$\pm$.12} & \textbf{9.88$\pm$.06} & \textbf{8.68$\pm$.10} \\
GPT    & 7.91$\pm$.16 & 8.81$\pm$.11 & \textbf{8.67$\pm$.12} & \underline{9.71$\pm$.11} & 8.49$\pm$.13 \\
Claude & 7.96$\pm$.17 & \textbf{9.06$\pm$.06} & 7.91$\pm$.13 & 9.62$\pm$.12 & \underline{8.52$\pm$.13} \\
Llama  & 7.87$\pm$.18 & 8.71$\pm$.11 & 6.90$\pm$.20 & 8.58$\pm$.25 & 7.26$\pm$.22 \\
Qwen   & \textbf{8.02$\pm$.15} & 8.39$\pm$.16 & 6.09$\pm$.20 & 7.16$\pm$.27 & 5.99$\pm$.23 \\
\bottomrule
\end{tabular}}

\medskip
{\small (b) MindEval (1--5 scale, $n{=}50$)}
\vspace{2pt}

\resizebox{\columnwidth}{!}{%
\begin{tabular}{@{}lccccc@{}}
\toprule
& \multicolumn{5}{c}{\textbf{Judge}} \\
\cmidrule(l){2-6}
\textbf{Model} & Qwen & Llama & GPT & Gemini & Claude \\
\midrule
Claude  & 3.86$\pm$.08 & \textbf{4.67$\pm$.07} & \textbf{4.05$\pm$.06} & \textbf{4.72$\pm$.06} & \textbf{4.84$\pm$.04} \\
GPT     & \textbf{4.04$\pm$.09} & \underline{4.51$\pm$.07} & \underline{3.76$\pm$.09} & \underline{3.95$\pm$.16} & \underline{3.39$\pm$.15} \\
Gemini  & \underline{3.90$\pm$.08} & 4.49$\pm$.06 & 3.63$\pm$.05 & 3.91$\pm$.12 & 3.31$\pm$.08 \\
Llama   & 3.70$\pm$.07 & 4.29$\pm$.05 & 3.16$\pm$.06 & 3.00$\pm$.10 & 2.85$\pm$.06 \\
Qwen    & 3.26$\pm$.12 & 3.38$\pm$.21 & 2.32$\pm$.08 & 1.69$\pm$.08 & 1.74$\pm$.05 \\
\bottomrule
\end{tabular}}
\end{table}

\subsection{Variance Components}
\label{sec:variance_components}

Table~\ref{tab:per_obs} reports the per-observation variance component estimates via crossed ANOVA (see Appendix~\ref{app:estimation}). On MT-Bench, scenario variance $\hat{\sigma}^2_\alpha$ is the largest per-observation component; maximizing the number of distinct scenarios always reduces total variance (Appendix~\ref{app:scenario_coverage}). Generation variance $\hat{\sigma}^2_\beta$ drops with capability, consistent with the observation that \textbf{stronger models produce more deterministic outputs}. As frontier models increasingly rely on RL post-training, generation variance may shrink further \citep{matsutani2025rl}. The judge offset variance $\hat{\sigma}^2_\gamma$ remains substantial for all models, accounting for $>$94\% of benchmark variance at the default operating point (see Appendix~\ref{app:variance_tables}).

On MindEval, both scenario and generation variance are markedly lower. Scenario variance $\hat{\sigma}^2_\alpha$ drops by one to two orders of magnitude because all MindEval scenarios evaluate the same rubrics across different user profiles, so difficulty varies far less than on MT-Bench. Generation variance is similarly suppressed: each MindEval generation is a full 10-turn therapeutic session, so per-response decoding randomness averages out over many turns. The judge offset variance remains the dominant component throughout, exceeding both scenario and generation variance by an order of magnitude.

\begin{table}[t]
\centering
\caption{Per-observation variance component estimates.}
\label{tab:per_obs}
\smallskip
{\small (a) MT-Bench ($n{=}80$)}
\vspace{2pt}

\resizebox{\columnwidth}{!}{%
\begin{tabular}{@{}lccccc@{}}
\toprule
\textbf{Component} & \textbf{Qwen} & \textbf{Llama} & \textbf{GPT} & \textbf{Gemini} & \textbf{Claude} \\
\midrule
$\hat{\sigma}^2_\alpha$ (scenario)     & 1.530 & 0.882 & 0.634 & 0.408 & 0.393 \\
$\hat{\sigma}^2_\beta$ (generation)  & 0.266 & 0.238 & 0.076 & 0.000 & 0.000 \\
$\hat{\sigma}^2_\gamma$ (judge)    & 0.947 & 0.503 & 0.339 & 0.435 & 0.427 \\
$\hat{\sigma}^2_\varepsilon$ (residual) & 1.486 & 1.130 & 0.564 & 0.661 & 0.830 \\
\bottomrule
\end{tabular}}

\medskip
{\small (b) MindEval ($n{=}50$)}
\vspace{2pt}

\resizebox{\columnwidth}{!}{%
\begin{tabular}{@{}lccccc@{}}
\toprule
\textbf{Component} & \textbf{Qwen} & \textbf{Llama} & \textbf{GPT} & \textbf{Gemini} & \textbf{Claude} \\
\midrule
$\hat{\sigma}^2_\alpha$ (scenario)     & 0.034 & 0.014 & 0.021 & 0.015 & 0.005 \\
$\hat{\sigma}^2_\beta$ (generation)  & 0.100 & 0.019 & 0.047 & 0.016 & 0.002 \\
$\hat{\sigma}^2_\gamma$ (judge)    & 0.523 & 0.280 & 0.132 & 0.150 & 0.155 \\
$\hat{\sigma}^2_\varepsilon$ (residual) & 0.159 & 0.069 & 0.207 & 0.103 & 0.061 \\
\bottomrule
\end{tabular}}
\end{table}

\subsection{Strategy Comparison}

Figure~\ref{fig:strategy} validates the three allocation strategies via 5{,}000 subsampling repetitions at each budget level on both benchmarks. Markers show empirical variance; dashed lines show exact predictions from empirical pool variances (Appendix~\ref{app:pool_pred}). CyclicJudge achieves lower variance everywhere on both benchmarks, and predictions match empirical results precisely. At $B{=}5$, switching from random to cycling cuts variance by ${\sim}$27--40\% across models on MT-Bench.

The optimal strategy among the two alternatives depends on the sign of $(\Ktot{-}1)\sigma_\beta^2 - \sigma_\gamma^2$: all-judges wins when $\sigma_\gamma^2/\sigma_\beta^2 > \Ktot{-}1 = 4$. On MT-Bench, this ratio is below the threshold for Qwen (3.6) and Llama (2.1), so random judging is preferable; for Gemini and Claude ($\hat{\sigma}^2_\beta \approx 0$), all-judges converges to CyclicJudge. On MindEval, the ratio exceeds the threshold for nearly every model, so all-judges consistently outperforms random. The driver is not a larger judge offset variance, but rather a sharply lower generation variance $\hat{\sigma}^2_\beta$. Meanwhile, judges' disagreement over subjective clinical criteria keeps $\hat{\sigma}^2_\gamma$ substantial relative to $\hat{\sigma}^2_\beta$. \textbf{CyclicJudge outperforms both alternatives regardless of this ratio}.

\begin{figure}[t]
\centering
\begin{subfigure}[b]{\linewidth}
\centering
\includegraphics[width=\linewidth]{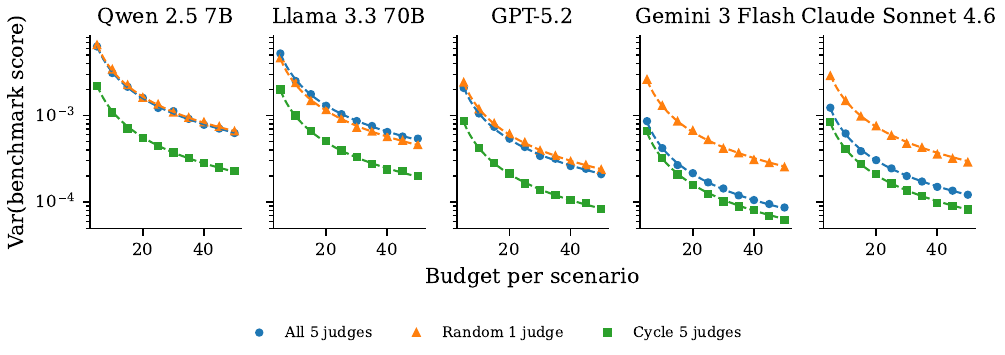}
\caption{MT-Bench (1--10 scale, $n{=}80$, $m_{\max}{=}10$)}
\end{subfigure}
\vspace{0.3em}
\begin{subfigure}[b]{\linewidth}
\centering
\includegraphics[width=\linewidth]{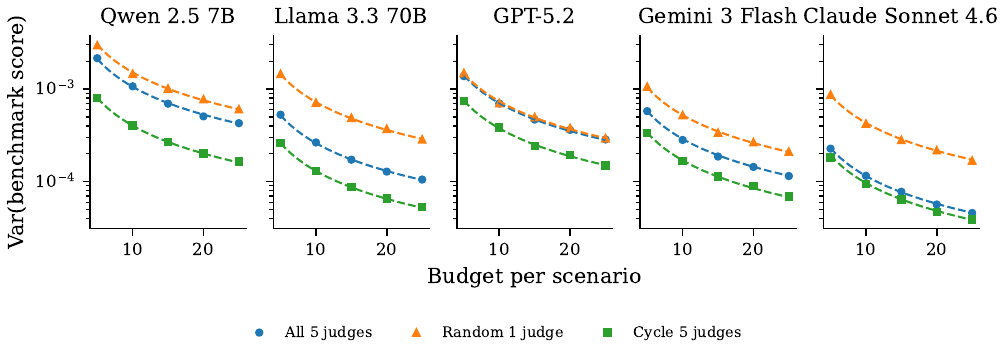}
\caption{MindEval (1--5 scale, $n{=}50$, $m_{\max}{=}5$)}
\end{subfigure}
\caption{Benchmark score variance $\Var(\bar{X})$ vs.\ per-scenario budget for three strategies: all judges (A), random single judge (B), and CyclicJudge (C).}
\label{fig:strategy}
\end{figure}

\section{Conclusion}

We develop a variance decomposition for LLM-as-judge evaluation that separates systematic between-judge offsets from random noise and use it to derive CyclicJudge, a round-robin judge assignment strategy. Our analysis and experiments on both a general-purpose benchmark (MT-Bench) and a domain-specific clinical benchmark (MindEval) yield three main findings: (1) Systematic judge offsets dominate benchmark score variance at default operating points. (2) Random noise components decrease with model capability and multi-turn evaluation, but the panel-relative offset variance does not. (3) CyclicJudge achieves the lowest variance among all allocation strategies at every budget level, regardless of the model's variance profile, by recovering the panel-mean score exactly while maximizing generation diversity. The method requires no model-specific tuning and maintains the same per-item cost as single-judge evaluation, providing practitioners with a cost-neutral drop-in replacement for more reliable LLM evaluation.

\section*{Limitations}

\paragraph{Panel-mean reference.}
Our analysis treats the mean score of the $\Ktot$-judge panel as the reference for evaluation. A diverse panel drawn from disjoint model families typically correlates more strongly with human judgments than any single judge \citep{verga2024replacing}, but the panel means is not a substitute for absolute ground truth. Biases shared across the panel members are absorbed into $\mu_\theta$ and propagate into the panel mean. CyclicJudge addresses the within-panel offsets between individual judges, not biases that all panel members share; calibrating the panel itself against human or expert references is complementary work.

\paragraph{Linear model approximation.}
Our variance decomposition assumes a linear random effects model, which treats scores as unbounded and continuous. LLM-as-judge scores are in fact bounded ordinal variables, so the linear model can produce boundary artifacts and heteroscedastic residuals near the scale endpoints. A generalized linear mixed model (GLMM), such as a cumulative-link mixed model, would properly account for the ordinal, bounded nature of the responses. However, GLMMs do not yield the closed-form variance decomposition that our allocation analysis relies on. We therefore retain the linear approximation, noting that it remains reasonable when the scale range is large relative to the observed score dispersion, as in our benchmarks.

\paragraph{Judge pool size.}
Our judge pool of $\Ktot = 5$ limits exploration of larger panels. While the theoretical results generalize to arbitrary $K$, empirical validation with larger and more diverse judge pools would strengthen the practical recommendations.

\paragraph{Scenario exchangeability.}
The ``maximize scenarios'' recommendation assumes scenarios are interchangeable draws from the dataset; in practice, some scenarios may be more informative than others, and item-selection methods \citep{polo2024tinybenchmarks} could alter the tradeoff.

\paragraph{Balanced design assumption.}
The allocation analysis assumes a balanced design in which every scenario receives the same number of judge calls. Extensions to adaptive or sequential designs remain future work.

\paragraph{Uniform cost assumption.}
Our budget analysis treats all judge calls as equal cost; in practice, judges vary widely in price and latency. A cost-weighted allocation that accounts for per-model pricing and inference speed would be needed for real deployment decisions.

\bibliography{references}

@inproceedings{zheng2023judging,
    title = "Judging {LLM}-as-a-Judge with {MT-Bench} and {Chatbot Arena}",
    author = "Zheng, Lianmin and
      Chiang, Wei-Lin and
      Sheng, Ying and
      Zhuang, Siyuan and
      Wu, Zhanghao and
      Zhuang, Yonghao and
      Lin, Zi and
      Li, Zhuohan and
      Li, Dacheng and
      Xing, Eric P. and
      Zhang, Hao and
      Gonzalez, Joseph E. and
      Stoica, Ion",
    booktitle = "Advances in Neural Information Processing Systems",
    volume = "36",
    year = "2023",
    url = "https://arxiv.org/abs/2306.05685"
}

@inproceedings{chiang2024chatbot,
    title = "Chatbot Arena: An Open Platform for Evaluating {LLM}s by Human Preference",
    author = "Chiang, Wei-Lin and
      Zheng, Lianmin and
      Sheng, Ying and
      Angelopoulos, Anastasios Nikolas and
      Li, Tianle and
      Li, Dacheng and
      Zhang, Hao and
      Zhu, Banghua and
      Jordan, Michael and
      Gonzalez, Joseph E. and
      Stoica, Ion",
    booktitle = "Proceedings of the 41st International Conference on Machine Learning",
    year = "2024",
    url = "https://arxiv.org/abs/2403.04132"
}

@misc{gu2024survey,
    title = "A Survey on {LLM}-as-a-Judge",
    author = "Gu, Jiawei and
      Jiang, Xuhui and
      Shi, Zhichao and
      Tan, Hexiang and
      Zhai, Xuehao and
      Xu, Chengjin and
      Li, Yinghan and
      Liu, Jie and
      Li, Yiwen and
      Shen, Yicheng and
      Jiang, Jian and
      Guo, Shengyao and
      Xie, Pengjun",
    year = "2024",
    eprint = "2411.15594",
    archivePrefix = "arXiv",
    primaryClass = "cs.CL",
    url = "https://arxiv.org/abs/2411.15594"
}

@inproceedings{wang2023large,
    title = "Large Language Models Are Not Fair Evaluators",
    author = "Wang, Peiyi and
      Li, Lei and
      Chen, Liang and
      Zhu, Dawei and
      Lin, Binghuai and
      Cao, Yunbo and
      Liu, Qi and
      Liu, Tianyu and
      Sui, Zhifang",
    booktitle = "Proceedings of the 62nd Annual Meeting of the Association for Computational Linguistics",
    year = "2024",
    url = "https://arxiv.org/abs/2305.17926"
}

@misc{dubois2024length,
    title = "Length-Controlled {AlpacaEval}: A Simple Way to Debias Automatic Evaluators",
    author = "Dubois, Yann and
      Galambosi, Bal{\'a}zs and
      Liang, Percy and
      Hashimoto, Tatsunori B.",
    year = "2024",
    eprint = "2404.04475",
    archivePrefix = "arXiv",
    primaryClass = "cs.CL",
    url = "https://arxiv.org/abs/2404.04475"
}

@inproceedings{koo2023benchmarking,
    title = "Benchmarking Cognitive Biases in Large Language Models as Evaluators",
    author = "Koo, Ryan and
      Lee, Minhwa and
      Raheja, Vipul and
      Park, Jong Inn and
      Kim, Zae Myung and
      Kang, Dongyeop",
    booktitle = "Findings of the Association for Computational Linguistics: ACL 2024",
    year = "2024",
    url = "https://arxiv.org/abs/2309.17012"
}

@misc{wataoka2024self,
    title = "Self-Preference Bias in {LLM}-as-a-Judge",
    author = "Wataoka, Koki and
      Takahashi, Tsubasa and
      Ri, Ryokan",
    year = "2024",
    eprint = "2410.21819",
    archivePrefix = "arXiv",
    primaryClass = "cs.CL",
    url = "https://arxiv.org/abs/2410.21819"
}

@misc{verga2024replacing,
    title = "Replacing Judges with Juries: Evaluating {LLM} Generations with a Panel of Diverse Models",
    author = "Verga, Pat and
      Hofstatter, Sebastian and
      Althammer, Sophia and
      Su, Yixuan and
      Leontiadis, Aleksandra and
      Izacard, Gautier and
      Petroni, Fabio",
    year = "2024",
    eprint = "2404.18796",
    archivePrefix = "arXiv",
    primaryClass = "cs.CL",
    url = "https://arxiv.org/abs/2404.18796"
}

@misc{schroeder2024trust,
    title = "Can You Trust {LLM} Judgments? Reliability of {LLM}-as-a-Judge",
    author = "Schroeder, Kayla and
      Wood-Doughty, Zach",
    year = "2024",
    eprint = "2412.12509",
    archivePrefix = "arXiv",
    primaryClass = "cs.CL",
    url = "https://arxiv.org/abs/2412.12509"
}

@misc{zhang2025beyond,
    title = "Beyond the Singular: Revealing the Value of Multiple Generations in Benchmark Evaluation",
    author = "Zhang, Yifan and
      Cai, Dingwen and
      Chen, Xingjian",
    year = "2025",
    eprint = "2502.08943",
    archivePrefix = "arXiv",
    primaryClass = "cs.CL",
    url = "https://arxiv.org/abs/2502.08943"
}

@misc{saha2026budget,
    title = "{LLM}-as-Judge on a Budget",
    author = "Saha, Aadirupa and
      Wagde, Aniket and
      Kveton, Branislav",
    year = "2026",
    eprint = "2602.15481",
    archivePrefix = "arXiv",
    primaryClass = "cs.CL",
    url = "https://arxiv.org/abs/2602.15481"
}

@inproceedings{dorner2024limits,
    title = "Limits to Scalable Evaluation at the Frontier: {LLM} as Judge Won't Beat Twice the Data",
    author = "Dorner, Florian E. and
      Nastl, Vivian Y. and
      Hardt, Moritz",
    booktitle = "International Conference on Learning Representations",
    year = "2025",
    url = "https://arxiv.org/abs/2410.13341"
}

@inproceedings{polo2024tinybenchmarks,
    title = "tiny{B}enchmarks: Evaluating {LLM}s with Fewer Examples",
    author = "Maia Polo, Felipe and
      Weber, Lucas and
      Choshen, Leshem and
      Sun, Yuekai and
      Xu, Gongjun and
      Yurochkin, Mikhail",
    booktitle = "Proceedings of the 41st International Conference on Machine Learning",
    year = "2024",
    url = "https://arxiv.org/abs/2402.14992"
}

@article{mizrahi2024state,
    title = "State of What Art? {A} Call for Multi-Prompt {LLM} Evaluation",
    author = "Mizrahi, Moran and
      Kaplan, Guy and
      Malkin, Dan and
      Dror, Rotem and
      Shahaf, Dafna and
      Stanovsky, Gabriel",
    journal = "Transactions of the Association for Computational Linguistics",
    volume = "12",
    year = "2024",
    url = "https://arxiv.org/abs/2401.00595"
}

@misc{yang2024qwen25,
    title = "{Qwen2.5} Technical Report",
    author = "Yang, An and
      Yang, Baosong and
      Zhang, Beichen and
      Hui, Binyuan and
      Zheng, Bo and
      Yu, Bowen and
      Li, Chengyuan and
      Liu, Dayiheng and
      Huang, Fei and
      Wei, Haoran and
      Lin, Huan and
      Yang, Jian and
      Tu, Jianhong and
      Zhang, Jianwei and
      Yang, Jianxin and
      Yang, Jiaxi and
      Zhou, Jingren and
      Lin, Junyang and
      Dang, Kai and
      Lu, Keming and
      Bao, Keqin and
      Yang, Kexin and
      Yu, Le and
      Li, Mei and
      Xue, Mingfeng and
      Zhang, Pei and
      Zhu, Qin and
      Men, Rui and
      Lin, Runji and
      Li, Tianhao and
      Tang, Tianyi and
      Xia, Tingyu and
      Ren, Xingzhang and
      Ren, Xuancheng and
      Fan, Yang and
      Su, Yang and
      Zhang, Yichang and
      Wan, Yu and
      Liu, Yuqiong and
      Cui, Zeyu and
      Zhang, Zhenru and
      Qiu, Zihan and
      others",
    year = "2024",
    eprint = "2412.15115",
    archivePrefix = "arXiv",
    primaryClass = "cs.CL",
    url = "https://arxiv.org/abs/2412.15115"
}

@misc{grattafiori2024llama,
    title = "The {L}lama 3 Herd of Models",
    author = "Grattafiori, Aaron and
      Dubey, Abhimanyu and
      Jauhri, Abhinav and
      Pandey, Abhinav and
      Kadian, Abhishek and
      Al-Dahle, Ahmad and
      Letman, Aiesha and
      Mathur, Akhil and
      Schelten, Alan and
      Vaughan, Alex and
      Yang, Amy and
      Fan, Angela and
      Goyal, Anirudh and
      Hartshorn, Anthony and
      Yang, Aobo and
      Mitra, Archi and
      Sravankumar, Archie and
      Korenev, Artem and
      Hinsvark, Arthur and
      Rao, Arun and
      Zhang, Aston and
      Rodriguez, Aurelien and
      Gregerson, Austen and
      Spataru, Ava and
      Roziere, Baptiste and
      Biron, Bethany and
      Tang, Binh and
      Chern, Bobbie and
      Caucheteux, Charlotte and
      Nayak, Chaya and
      Bi, Chloe and
      Marra, Chris and
      McConnell, Chris and
      Keller, Christian and
      Touret, Christophe and
      Wu, Chunyang and
      Wong, Corinne and
      Ferrer, Cristian Canton and
      Nikolaidis, Cyrus and
      Allonsius, Damien and
      Song, Daniel and
      Pintz, Danielle and
      Livshits, Danny and
      Wyatt, Danny and
      Esiobu, David and
      Choudhary, Dhruv and
      Mahajan, Dhruv and
      Garcia-Olano, Diego and
      Perino, Diego and
      Hupkes, Dieuwke and
      Lakomkin, Egor and
      AlBadawy, Ehab and
      Lobanova, Elina and
      Dinan, Emily and
      Smith, Eric Michael and
      Radenovic, Filip and
      Guzm{\'a}n, Francisco and
      Zhang, Frank and
      Synnaeve, Gabriel and
      Lee, Gabrielle and
      Anderson, Georgia Lewis and
      Thattai, Govind and
      Nail, Graeme and
      Mialon, Gregoire and
      Pang, Guan and
      Cucurell, Guillem and
      Nguyen, Hailey and
      Korevaar, Hannah and
      Xu, Hu and
      Touvron, Hugo and
      Zarov, Iliyan and
      Ibarra, Imanol Arrieta and
      Kloumann, Isabel and
      Misra, Ishan and
      Evtimov, Ivan and
      Zhang, Jack and
      Copet, Jade and
      Lee, Jaewon and
      Geffert, Jan and
      Vranes, Jana and
      Park, Jason and
      Mahadeokar, Jay and
      Shah, Jeet and
      van der Linde, Jelmer and
      Billock, Jennifer and
      Hong, Jenny and
      Lee, Jenya and
      Fu, Jeremy and
      Chi, Jianfeng and
      Huang, Jianyu and
      Liu, Jiawen and
      Wang, Jie and
      Yu, Jiecao and
      Bitton, Joanna and
      Spisak, Joe and
      Park, Jongsoo and
      Rocca, Joseph and
      Johnstun, Joshua and
      Saxe, Joshua and
      Jia, Junteng and
      Alwala, Kalyan Vasuden and
      Prasad, Karthik and
      Upasani, Kartikeya and
      Plawiak, Kate and
      Li, Ke and
      Heafield, Kenneth and
      Stone, Kevin and
      El-Arini, Khalid and
      Iyer, Krithika and
      Malik, Kshitiz and
      Chiu, Kuenley and
      Bhalla, Kunal and
      Lakhotia, Kushal and
      Rantala-Yeary, Lauren and
      van der Maaten, Laurens and
      Chen, Lawrence and
      Tan, Liang and
      Jenkins, Liz and
      Martin, Louis and
      Madaan, Lovish and
      Malo, Lubo and
      Blecher, Lukas and
      Landzaat, Lukas and
      de Oliveira, Luke and
      Muzzi, Madeline and
      Pasupuleti, Mahesh and
      Singh, Mannat and
      Paluri, Manohar and
      Kardas, Marcin and
      Tsimpoukelli, Maria and
      Oldham, Mathew and
      Rita, Mathieu and
      Pavlova, Maya and
      Kambadur, Melanie and
      Lewis, Mike and
      Si, Min and
      Singh, Mitesh Kumar and
      Hassan, Mona and
      Goyal, Naman and
      Torabi, Narjes and
      Bashlykov, Nikolay and
      Bogoychev, Nikolay and
      Chatterji, Niladri and
      Zhang, Ning and
      Duchenne, Olivier and
      {\c{C}}elebi, Onur and
      Alrassy, Patrick and
      Zhang, Pengchuan and
      Li, Pengwei and
      Vasic, Petar and
      Weng, Peter and
      Bhargava, Prajjwal and
      Dubal, Pratik and
      Krishnan, Praveen and
      Koura, Punit Singh and
      Xu, Puxin and
      He, Qing and
      Dong, Qingxiao and
      Srinivasan, Ragavan and
      Ganapathy, Raj and
      Calderer, Ramon and
      Cabral, Ricardo Silveira and
      Stojnic, Robert and
      Raileanu, Roberta and
      Maheswari, Rohan and
      Girdhar, Rohit and
      Patel, Rohit and
      Sauvestre, Romain and
      Polidoro, Ronnie and
      Sumbaly, Roshan and
      Taylor, Ross and
      Silva, Ruan and
      Hou, Rui and
      Wang, Rui and
      Hosseini, Saghar and
      Chennabasappa, Sahana and
      Singh, Sanjay and
      Bell, Sean and
      Kim, Seohyun Sonia and
      Edunov, Sergey and
      Nie, Shaoliang and
      Narang, Sharan and
      Raparthy, Sharath and
      Shen, Sheng and
      Wan, Shengye and
      Bhosale, Shruti and
      Zhang, Shun and
      Vandenhende, Simon and
      Batra, Soumya and
      Whitman, Spencer and
      Sootla, Sten and
      Collot, Stephane and
      Gururangan, Suchin and
      Borodinsky, Sydney and
      Herman, Tamar and
      Fowler, Tara and
      Sheasha, Tarek and
      Georgiou, Thomas and
      Scialom, Thomas and
      Speckbacher, Tobias and
      Mihaylov, Todor and
      Xiao, Tong and
      Karn, Ujjwal and
      Goswami, Vedanuj and
      Gupta, Vibhor and
      Ramanathan, Vignesh and
      Kerkez, Viktor and
      Gonguet, Vincent and
      Do, Virginie and
      Vogeti, Vish and
      Albiero, V{\'i}tor and
      Petrovic, Vladan and
      Chu, Weiwei and
      Xiong, Wenhan and
      Fu, Wenyin and
      Meers, Whitney and
      Martinet, Xavier and
      Wang, Xiaodong and
      Wang, Xiaofang and
      Tan, Xiaoqing Ellen and
      Xia, Xide and
      Xie, Xinfeng and
      Jia, Xuchao and
      Wang, Xuewei and
      Goldschlag, Yaelle and
      Gaur, Yashesh and
      Babaei, Yasmine and
      Wen, Yi and
      Song, Yiwen and
      Zhang, Yuchen and
      Li, Yue and
      Mao, Yuning and
      Coudert, Zacharie Delpierre and
      Yan, Zheng and
      Chen, Zhengxing and
      Papakipos, Zoe and
      Singh, Aaditya and
      Srivastava, Aayushi and
      Jain, Abha and
      Kelsey, Adam and
      Shajnfeld, Adam and
      Gangidi, Adithya and
      Victoria, Adolfo and
      Goldstand, Ahuva and
      Menon, Ajay and
      Sharma, Ajay and
      Boesenberg, Alex and
      Baevski, Alexei and
      Feinstein, Allie and
      Kallet, Amanda and
      Sangani, Amit and
      Teo, Amos and
      Yunus, Anam and
      Lupu, Andrei and
      Alvarado, Andres and
      Caples, Andrew and
      Gu, Andrew and
      Ho, Andrew and
      Poulton, Andrew and
      Ryan, Andrew and
      Ramchandani, Ankit and
      Dong, Annie and
      Franco, Annie and
      Goyal, Anuj and
      Saraf, Aparajita and
      Chowdhury, Arkabandhu and
      Gabriel, Ashley and
      Bharambe, Ashwin and
      Eisenman, Assaf and
      Yazdan, Azadeh and
      James, Beau and
      Maurer, Ben and
      Leonhardi, Benjamin and
      Huang, Bernie and
      Loyd, Beth and
      De Paola, Beto and
      Paranjape, Bhargavi and
      Liu, Bing and
      Wu, Bo and
      Ni, Boyu and
      Hancock, Braden and
      Wasti, Bram and
      Spence, Brandon and
      Stojkovic, Brani and
      Gamido, Brian and
      Montalvo, Britt and
      Parker, Carl and
      Burton, Carly and
      Mejia, Catalina and
      Liu, Ce and
      Wang, Changhan and
      Kim, Changkyu and
      Zhou, Chao and
      Hu, Chester and
      Chu, Ching-Hsiang and
      Cai, Chris and
      Tindal, Chris and
      Feichtenhofer, Christoph and
      Gao, Cynthia and
      Civin, Damon and
      Beaty, Dana and
      Kreymer, Daniel and
      Li, Daniel and
      Adkins, David and
      Xu, David and
      Testuggine, Davide and
      David, Delia and
      Parikh, Devi and
      Liskovich, Diana and
      Foss, Didem and
      Wang, Dingkang and
      Le, Duc and
      Holland, Dustin and
      Dowling, Edward and
      Jamil, Eissa and
      Montgomery, Elaine and
      Presani, Eleonora and
      Hahn, Emily and
      Wood, Emily and
      Le, Eric-Tuan and
      Brinkman, Erik and
      Arcaute, Esteban and
      Dunbar, Evan and
      Smothers, Evan and
      Sun, Fei and
      Kreuk, Felix and
      Tian, Feng and
      Kokkinos, Filippos and
      Ozgenel, Firat and
      Caggioni, Francesco and
      Kanayet, Frank and
      Seide, Frank and
      Florez, Gabriela Medina and
      Schwarz, Gabriella and
      Badeer, Gada and
      Swee, Georgia and
      Halpern, Gil and
      Herman, Grant and
      Sizov, Grigory and
      Lakshminarayanan, Guna and
      Inan, Hakan and
      Shojanazeri, Hamid and
      Zou, Han and
      Wang, Hannah and
      Zha, Hanwen and
      Habeeb, Haroun and
      Rudolph, Harrison and
      Suk, Helen and
      Aspegren, Henry and
      Goldman, Hunter and
      Zhan, Hongyuan and
      Damlaj, Ibrahim and
      Molybog, Igor and
      Tufanov, Igor and
      Leontiadis, Ilias and
      Veliche, Irina-Elena and
      Gat, Itai and
      Weissman, Jake and
      Geboski, James and
      Kohli, James and
      Lam, Janice and
      Asher, Japhet and
      Gaya, Jean-Baptiste and
      Marcus, Jeff and
      Tang, Jeff and
      Chan, Jennifer and
      Zhen, Jenny and
      Reizenstein, Jeremy and
      Teboul, Jeremy and
      Zhong, Jessica and
      Jin, Jian and
      Yang, Jingyi and
      Cummings, Joe and
      Carvill, Jon and
      Shepard, Jon and
      McPhie, Jonathan and
      Torres, Jonathan and
      Ginsburg, Josh and
      Wang, Junjie and
      Wu, Kai and
      U, Kam Hou and
      Saxena, Karan and
      Khandelwal, Kartikay and
      Zand, Katayoun and
      Matosich, Kathy and
      Veeraraghavan, Kaushik and
      Michelena, Kelly and
      Li, Keqian and
      Jagadeesh, Kiran and
      Huang, Kun and
      Chawla, Kunal and
      Huang, Kyle and
      Chen, Lailin and
      Garg, Lakshya and
      A, Lavender and
      Silva, Leandro and
      Bell, Lee and
      Zhang, Lei and
      Guo, Liangpeng and
      Yu, Licheng and
      Moshkovich, Liron and
      Wehrstedt, Luca and
      Khabsa, Madian and
      Avalani, Manav and
      Bhatt, Manish and
      Mankus, Martynas and
      Hasson, Matan and
      Lennie, Matthew and
      Reso, Matthias and
      Groshev, Maxim and
      Naumov, Maxim and
      Lathi, Maya and
      Keneally, Meghan and
      Liu, Miao and
      Seltzer, Michael L. and
      Valko, Michal and
      Restrepo, Michelle and
      Patel, Mihir and
      Vyatskov, Mik and
      Samvelyan, Mikayel and
      Clark, Mike and
      Macey, Mike and
      Wang, Mike and
      Hermoso, Miquel Jubert and
      Metanat, Mo and
      Rastegari, Mohammad and
      Bansal, Munish and
      Santhanam, Nandhini and
      Parks, Natascha and
      White, Natasha and
      Bawa, Navyata and
      Singhal, Nayan and
      Egebo, Nick and
      Usunier, Nicolas and
      Mehta, Nikhil and
      Laptev, Nikolay Pavlovich and
      Dong, Ning and
      Cheng, Norman and
      Chernoguz, Oleg and
      Hart, Olivia and
      Salpekar, Omkar and
      Kalinli, Ozlem and
      Kent, Parkin and
      Parekh, Parth and
      Saab, Paul and
      Balaji, Pavan and
      Rittner, Pedro and
      Bontrager, Philip and
      Roux, Pierre and
      Dollar, Piotr and
      Zvyagina, Polina and
      Ratanchandani, Prashant and
      Yuvraj, Pritish and
      Liang, Qian and
      Alao, Rachad and
      Rodriguez, Rachel and
      Ayub, Rafi and
      Murthy, Raghotham and
      Nayani, Raghu and
      Mitra, Rahul and
      Parthasarathy, Rangaprabhu and
      Li, Raymond and
      Hogan, Rebekkah and
      Battey, Robin and
      Wang, Rocky and
      Howes, Russ and
      Rinott, Ruty and
      Mehta, Sachin and
      Siby, Sachin and
      Bondu, Sai Jayesh and
      Datta, Samyak and
      Chugh, Sara and
      Hunt, Sara and
      Dhillon, Sargun and
      Sidorov, Sasha and
      Pan, Satadru and
      Mahajan, Saurabh and
      Verma, Saurabh and
      Yamamoto, Seiji and
      Ramaswamy, Sharadh and
      Lindsay, Shaun and
      Feng, Sheng and
      Lin, Shenghao and
      Zha, Shengxin Cindy and
      Patil, Shishir and
      Shankar, Shiva and
      Zhang, Shuqiang and
      Wang, Sinong and
      Agarwal, Sneha and
      Sajuyigbe, Soji and
      Chintala, Soumith and
      Max, Stephanie and
      Chen, Stephen and
      Kehoe, Steve and
      Satterfield, Steve and
      Govindaprasad, Sudarshan and
      Gupta, Sumit and
      Deng, Summer and
      Cho, Sungmin and
      Virk, Sunny and
      Subramanian, Suraj and
      Choudhury, Sy and
      Goldman, Sydney and
      Remez, Tal and
      Glaser, Tamar and
      Best, Tamara and
      Koehler, Thilo and
      Robinson, Thomas and
      Li, Tianhe and
      Zhang, Tianjun and
      Matthews, Tim and
      Chou, Timothy and
      Shaked, Tzook and
      Vontimitta, Varun and
      Ajayi, Victoria and
      Montanez, Victoria and
      Mohan, Vijai and
      Kumar, Vinay Satish and
      Mangla, Vishal and
      Ionescu, Vlad and
      Poenaru, Vlad and
      Mihailescu, Vlad Tiberiu and
      Ivanov, Vladimir and
      Li, Wei and
      Wang, Wenchen and
      Jiang, Wenwen and
      Bouaziz, Wes and
      Constable, Will and
      Tang, Xiaocheng and
      Wu, Xiaojian and
      Wang, Xiaolan and
      Wu, Xilun and
      Gao, Xinbo and
      Kleinman, Yaniv and
      Chen, Yanjun and
      Hu, Ye and
      Jia, Ye and
      Qi, Ye and
      Li, Yenda and
      Zhang, Yilin and
      Zhang, Ying and
      Adi, Yossi and
      Nam, Youngjin and
      Zhao, Yu and
      Hao, Yuchen and
      Qian, Yundi and
      Li, Yunlu and
      He, Yuzi and
      Rait, Zach and
      DeVito, Zachary and
      Rosnbrick, Zef and
      Wen, Zhaoduo and
      Yang, Zhenyu and
      Zhao, Zhiwei and
      Ma, Zhiyu and
      others",
    year = "2024",
    eprint = "2407.21783",
    archivePrefix = "arXiv",
    primaryClass = "cs.AI",
    url = "https://arxiv.org/abs/2407.21783"
}

@book{searle1992variance,
    title = {Variance Components},
    author = {Searle, Shayle R. and Casella, George and McCulloch, Charles E.},
    year = {1992},
    publisher = {Wiley},
    address = {New York},
    doi = {10.1002/9780470316856},
    url = {https://doi.org/10.1002/9780470316856}
}

@book{brennan2001generalizability,
    title = {Generalizability Theory},
    author = {Brennan, Robert L.},
    year = {2001},
    publisher = {Springer},
    address = {New York},
    doi = {10.1007/978-1-4757-3456-0},
    url = {https://link.springer.com/book/10.1007/978-1-4757-3456-0}
}

@book{shavelson1991generalizability,
    title = {Generalizability Theory: A Primer},
    author = {Shavelson, Richard J. and Webb, Noreen M.},
    year = {1991},
    publisher = {Sage Publications},
    address = {Newbury Park, CA},
    url = {https://books.google.com/books?id=gzyKRZWrm9EC}
}

@misc{shi2024judging,
    title = "Judging the Judges: A Systematic Study of Position Bias in Pairwise Comparison with {LLM}-as-a-Judge",
    author = "Shi, Lin and
      Ma, Chiyu and
      Liang, Wenhua and
      Diao, Xingjian and
      Ma, Weicheng and
      Vosoughi, Soroush",
    year = "2024",
    eprint = "2406.07791",
    archivePrefix = "arXiv",
    primaryClass = "cs.CL",
    url = "https://arxiv.org/abs/2406.07791"
}

@inproceedings{madaan2024quantifying,
    title = "Quantifying Variance in Evaluation Benchmarks",
    author = "Madaan, Lovish and
      Singh, Samarth and
      Schaeffer, Rylan and
      Poulton, Andrew and
      Koyejo, Sanmi and
      Stenetorp, Pontus and
      Narang, Sharan and
      Hupkes, Dieuwke",
    booktitle = "International Conference on Learning Representations",
    year = "2025",
    url = "https://arxiv.org/abs/2406.10229"
}

@misc{miller2024error,
    title = "Adding Error Bars to Evals: A Statistical Approach to Language Model Evaluations",
    author = "Miller, Evan",
    year = "2024",
    eprint = "2411.00640",
    archivePrefix = "arXiv",
    primaryClass = "cs.CL",
    url = "https://arxiv.org/abs/2411.00640"
}

@misc{alvarado2025repetitions,
    title = "Do Repetitions Matter? Strengthening Reliability in {LLM} Evaluations",
    author = "Alvarado Gonzalez, Miguel Angel and
      Bruno Hernandez, Michelle and
      Pe{\~n}aloza Perez, Miguel Angel and
      Lopez Orozco, Bruno and
      Cruz Soto, Jesus Tadeo and
      Malagon, Sandra",
    year = "2025",
    eprint = "2509.24086",
    archivePrefix = "arXiv",
    primaryClass = "cs.CL",
    url = "https://arxiv.org/abs/2509.24086"
}

@misc{luettgau2025hibayes,
    title = "{HiBayES}: A Hierarchical {B}ayesian Modeling Framework for {AI} Evaluation Statistics",
    author = "Luettgau, Lennart and
      Coppock, Alexander and
      Dubois, Yves and
      Summerfield, Christopher and
      Ududec, Cozmin",
    year = "2025",
    eprint = "2505.05602",
    archivePrefix = "arXiv",
    primaryClass = "cs.AI",
    url = "https://arxiv.org/abs/2505.05602"
}

@misc{pombal2025mindeval,
    title = "{MindEval}: Benchmarking Language Models on Multi-turn Mental Health Support",
    author = "Pombal, Jos{\'e} and
      D'Eon, Maya and
      Guerreiro, Nuno M. and
      Martins, Pedro Henrique and
      Farinhas, Ant{\'o}nio and
      Rei, Ricardo",
    year = "2025",
    eprint = "2511.18491",
    archivePrefix = "arXiv",
    primaryClass = "cs.CL",
    url = "https://arxiv.org/abs/2511.18491"
}

@misc{matsutani2025rl,
    title = "{RL} Squeezes, {SFT} Expands: A Comparative Study of Reasoning {LLM}s",
    author = "Matsutani, Kohsei and
      Takashiro, Shota and
      Minegishi, Gouki and
      Kojima, Takeshi and
      Iwasawa, Yusuke and
      Matsuo, Yutaka",
    year = "2025",
    eprint = "2509.21128",
    archivePrefix = "arXiv",
    primaryClass = "cs.CL",
    url = "https://arxiv.org/abs/2509.21128"
}

@book{fisher1925statistical,
    title = {Statistical Methods for Research Workers},
    author = {Fisher, Ronald A.},
    year = {1925},
    publisher = {Oliver and Boyd},
    address = {Edinburgh}
}

\appendix

\section{Judge Bias Estimates}
\label{app:judge_biases}

Figure~\ref{fig:judge_biases} shows the estimated per-judge bias for each judge--model pair on both benchmarks. A pronounced self-preference is visible along the diagonal: on MindEval, Claude Sonnet 4.6's self-bias of $+0.412$ is the \emph{only} positive entry in its otherwise uniformly negative row, and Qwen 2.5 7B's self-bias is the largest in its row on both benchmarks.

Judge temperament is not stable across benchmarks: Gemini 3 Flash is the most lenient judge on MT-Bench (all biases positive), yet becomes one of the harshest on MindEval, suggesting that judge personality is benchmark-dependent rather than an intrinsic model property. On MT-Bench, strictness correlates with model capability: the two strongest models by consensus (GPT-5.2 and Claude) are the most uniformly negative judges, while the two weakest (Qwen and Llama) are the most lenient. Finally, the harshest judge--model interactions concentrate on capability-mismatched pairs: GPT-5.2 and Claude penalize Qwen by over one full scale point on MT-Bench, while their biases toward each other are an order of magnitude smaller.

\begin{figure}[ht]
\centering
\begin{subfigure}[b]{\columnwidth}
\centering
\includegraphics[width=\columnwidth]{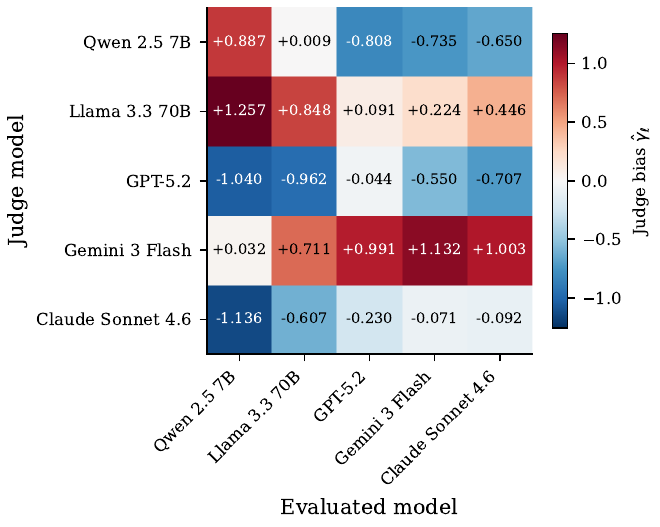}
\caption{MT-Bench}
\end{subfigure}
\vspace{0.5em}
\begin{subfigure}[b]{\columnwidth}
\centering
\includegraphics[width=\columnwidth]{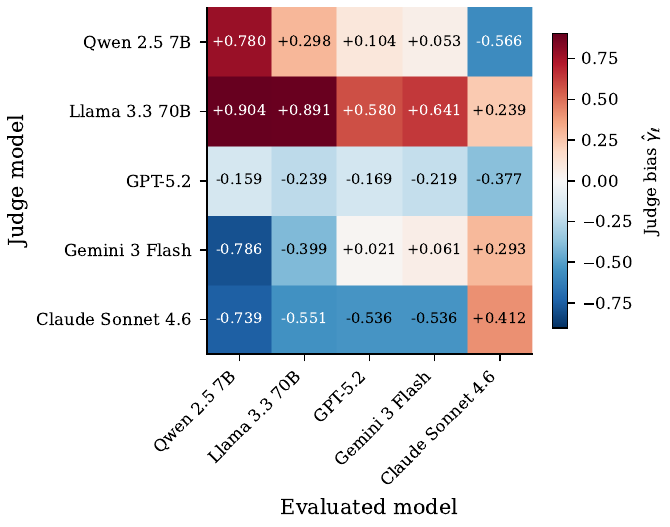}
\caption{MindEval}
\end{subfigure}
\caption{Estimated judge biases $\hat{\gamma}_\ell = \bar{X}_{\cdot\cdot\ell} - \bar{X}_{\cdots}$ for each judge--model pair.}
\label{fig:judge_biases}
\end{figure}

\section{Variance Component Estimates}
\label{app:variance_tables}

\begin{table}[ht]
\centering
\caption{Variance component estimates. \emph{Per-observation variances} and their \emph{contribution to $\Var(\bar{X})$} at the default operating point ($m{=}1$, $K{=}1$).}
\label{tab:components}
\smallskip
{\small (a) MT-Bench ($n{=}80$)}
\vspace{2pt}

\resizebox{\columnwidth}{!}{%
\begin{tabular}{@{}lccccc@{}}
\toprule
\textbf{Component} & \textbf{Qwen} & \textbf{Llama} & \textbf{GPT} & \textbf{Gemini} & \textbf{Claude} \\
\midrule
$\hat{\mu}_\theta$ (mean score) & 7.13 & 7.87 & 8.72 & 8.75 & 8.61 \\
\midrule
\multicolumn{6}{@{}l}{\emph{Per-observation variance}} \\
$\hat{\sigma}^2_\alpha$ (scenario)     & 1.530 & 0.882 & 0.634 & 0.408 & 0.393 \\
$\hat{\sigma}^2_\beta$ (generation)  & 0.266 & 0.238 & 0.076 & 0.000 & 0.000 \\
$\hat{\sigma}^2_\gamma$ (judge)    & 0.947 & 0.503 & 0.339 & 0.435 & 0.427 \\
$\hat{\sigma}^2_\varepsilon$ (residual) & 1.486 & 1.130 & 0.564 & 0.661 & 0.830 \\
\midrule
\multicolumn{6}{@{}l}{\emph{Contribution to $\Var(\bar{X})$ ($\times 10^{-3}$)}} \\
$\hat{\sigma}^2_\alpha / n$            & 19.13 & 11.03 & 7.92 & 5.11 & 4.92 \\
$\hat{\sigma}^2_\beta / nm$            & 3.33 & 2.97 & 0.94 & 0.00 & 0.00 \\
$\hat{\sigma}^2_\gamma \cdot \text{FPC} / K$  & 946.50 & 502.50 & 339.00 & 435.40 & 426.60 \\
$\hat{\sigma}^2_\varepsilon / nmK$     & 18.57 & 14.12 & 7.05 & 8.27 & 10.38 \\
\bottomrule
\end{tabular}}

\medskip
{\small (b) MindEval ($n{=}50$)}
\vspace{2pt}

\resizebox{\columnwidth}{!}{%
\begin{tabular}{@{}lccccc@{}}
\toprule
\textbf{Component} & \textbf{Qwen} & \textbf{Llama} & \textbf{GPT} & \textbf{Gemini} & \textbf{Claude} \\
\midrule
$\hat{\mu}_\theta$ (mean score) & 2.478 & 3.400 & 3.931 & 3.846 & 4.431 \\
\midrule
\multicolumn{6}{@{}l}{\emph{Per-observation variance}} \\
$\hat{\sigma}^2_\alpha$ (scenario)     & 0.034 & 0.014 & 0.021 & 0.015 & 0.005 \\
$\hat{\sigma}^2_\beta$ (generation)  & 0.100 & 0.019 & 0.047 & 0.016 & 0.002 \\
$\hat{\sigma}^2_\gamma$ (judge)    & 0.523 & 0.280 & 0.132 & 0.150 & 0.155 \\
$\hat{\sigma}^2_\varepsilon$ (residual) & 0.159 & 0.069 & 0.207 & 0.103 & 0.061 \\
\midrule
\multicolumn{6}{@{}l}{\emph{Contribution to $\Var(\bar{X})$ ($\times 10^{-3}$)}} \\
$\hat{\sigma}^2_\alpha / n$            & 0.67 & 0.28 & 0.43 & 0.29 & 0.09 \\
$\hat{\sigma}^2_\beta / nm$            & 2.01 & 0.38 & 0.93 & 0.31 & 0.04 \\
$\hat{\sigma}^2_\gamma \cdot \text{FPC} / K$  & 522.50 & 280.20 & 132.00 & 150.20 & 154.80 \\
$\hat{\sigma}^2_\varepsilon / nmK$     & 3.18 & 1.37 & 4.14 & 2.05 & 1.23 \\
\bottomrule
\end{tabular}}
\end{table}

Table~\ref{tab:components} reports the full variance component estimates. The generation variance entries $\hat{\sigma}^2_\beta = 0.000$ (Gemini and Claude on MT-Bench) arise because the method-of-moments estimator $\hat{\sigma}_\beta^2 = (\MS_G - \MS_W)/K$ (Appendix~\ref{app:estimation}) can return a negative value when the between-generation mean square is smaller than the residual mean square. Because a variance is non-negative by definition, negative estimates are set to zero---the standard truncation convention for ANOVA-based variance component estimation \citep{searle1992variance}. These zeros, therefore, indicate that generation-to-generation variability is negligible relative to residual noise for these models, not that it is exactly absent.

\section{Proof of Proposition~\ref{prop:variance}}
\label{app:derivation}

We prove the variance decomposition in three steps: (1) grand mean decomposition, (2) law of total variance, and (3) finite population correction for judge bias.

\paragraph{Step 1: Grand mean decomposition.}

Starting from $X_{ij\ell} = \mu_\theta + \alpha_i + \beta_{ij} + \gamma_\ell + \varepsilon_{ij\ell}$, we average hierarchically.

\emph{Average over $K$ judges within generation $(i,j)$.} The terms $\mu_\theta$, $\alpha_i$, and $\beta_{ij}$ are constant across judges:
\begin{align}
\bar{X}_{ij\cdot} &= \frac{1}{K}\sum_{\ell \in S_J} X_{ij\ell} \nonumber \\
&= \mu_\theta + \alpha_i + \beta_{ij} + \underbrace{\frac{1}{K}\sum_{\ell \in S_J} \gamma_\ell}_{\bar{\gamma}} + \underbrace{\frac{1}{K}\sum_{\ell \in S_J} \varepsilon_{ij\ell}}_{\bar{\varepsilon}_{ij}}
\end{align}
The judge bias mean $\bar{\gamma}$ is the same for every $(i,j)$ cell because the same $S_J$ evaluates all cells (crossed design).

\emph{Average over $m$ generations within scenario $i$.}
\begin{align}
\bar{X}_{i\cdot\cdot} &= \mu_\theta + \alpha_i + \underbrace{\frac{1}{m}\sum_{j=1}^m \beta_{ij}}_{\bar{\beta}_i} + \bar{\gamma} + \underbrace{\frac{1}{mK}\sum_{j,\ell} \varepsilon_{ij\ell}}_{\bar{\varepsilon}_{i\cdot}}
\end{align}

\emph{Average over $n$ scenarios.}
\begin{align}
\bar{X} &= \mu_\theta + \bar{\gamma} + \underbrace{\frac{1}{n}\sum_{i=1}^n \alpha_i}_{\bar{\alpha}} + \underbrace{\frac{1}{nm}\sum_{i,j} \beta_{ij}}_{\bar{\beta}} \nonumber \\
&\quad + \underbrace{\frac{1}{nmK}\sum_{i,j,\ell} \varepsilon_{ij\ell}}_{\bar{\varepsilon}} \label{eq:grandmean_full}
\end{align}

\paragraph{Step 2: Law of total variance.}

Let $\mathcal{S}$ denote the judge selection. By the law of total variance:
\begin{equation}
\Var(\bar{X}) = \E\bigl[\Var(\bar{X} \mid \mathcal{S})\bigr] + \Var\bigl(\E[\bar{X} \mid \mathcal{S}]\bigr)
\end{equation}

Conditioning on $\mathcal{S}$:

\emph{Term (I): Conditional variance.} Given $\mathcal{S}$, the bias $\bar{\gamma}$ is a constant. The remaining randomness comes from $\bar{\alpha}$, $\bar{\beta}$, $\bar{\varepsilon}$, which are means of mutually independent zero-mean random variables:
\begin{equation}
\Var(\bar{X} \mid \mathcal{S}) = \frac{\sigma_\alpha^2}{n} + \frac{\sigma_\beta^2}{nm} + \frac{\sigma_\varepsilon^2}{nmK}
\end{equation}
This does not depend on $\mathcal{S}$, so $\E[\Var(\bar{X} \mid \mathcal{S})]$ equals the same expression.

\emph{Term (II): Variance of conditional mean.} Since $\E[\alpha_i] = \E[\beta_{ij}] = \E[\varepsilon_{ij\ell}] = 0$:
\begin{equation}
\E[\bar{X} \mid \mathcal{S}] = \mu_\theta + \bar{\gamma} \implies \Var\bigl(\E[\bar{X} \mid \mathcal{S}]\bigr) = \Var(\bar{\gamma})
\end{equation}

\paragraph{Step 3: Finite population correction.}

$\bar{\gamma}$ is a sample mean of $K$ values drawn without replacement from the centered population $\{\gamma_1,\ldots,\gamma_{\Ktot}\}$.

\begin{lemma}[Finite population correction]
\label{lem:fpc}
Let $a_1, \ldots, a_P$ satisfy $\sum_i a_i = 0$ with $\sigma_a^2 = P^{-1}\sum_i a_i^2$. If $S$ is a simple random sample of size $k$ drawn without replacement:
\begin{equation}
\Var\!\left(\frac{1}{k}\sum_{i \in S} a_i\right) = \frac{\sigma_a^2}{k} \cdot \frac{P - k}{P - 1}
\end{equation}
\end{lemma}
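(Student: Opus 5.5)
I would prove Lemma~\ref{lem:fpc} with indicator variables. For each $i \in \{1,\ldots,P\}$ let $Z_i = \mathbf{1}[i \in S]$, so the sample mean is $\frac{1}{k}\sum_{i=1}^P Z_i a_i$. All the randomness now sits in the $Z_i$, while the $a_i$ are fixed constants. Under simple random sampling without replacement, the first two moments of the indicators are
\begin{equation*}
\E[Z_i] = \frac{k}{P}, \qquad \E[Z_i Z_j] = \frac{k(k-1)}{P(P-1)} \quad (i \neq j),
\end{equation*}
so that
\begin{equation*}
\Var(Z_i) = \frac{k}{P}\Bigl(1-\frac{k}{P}\Bigr), \qquad \Cov(Z_i,Z_j) = \frac{k(k-1)}{P(P-1)} - \frac{k^2}{P^2} = -\frac{k(P-k)}{P^2(P-1)}.
\end{equation*}
These formulas follow from counting the subsets of size $k$ that contain a given one or two elements.

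Expanding the variance of the linear combination gives
\begin{equation*}
\Var\Bigl(\sum_i Z_i a_i\Bigr) = \Var(Z_1)\sum_i a_i^2 + \Cov(Z_1,Z_2)\sum_{i\neq j} a_i a_j.
\end{equation*}
Here I use that all the variances are equal and all the covariances are equal by exchangeability. The centering constraint $\sum_i a_i = 0$ then eliminates the cross sum:
\begin{equation*}
\sum_{i\neq j} a_i a_j = \Bigl(\sum_i a_i\Bigr)^2 - \sum_i a_i^2 = -P\sigma_a^2,
\end{equation*}
and likewise $\sum_i a_i^2 = P\sigma_a^2$. Substituting,
\begin{equation*}
\Var\Bigl(\sum_i Z_i a_i\Bigr) = P\sigma_a^2\left[\frac{k(P-k)}{P^2} + \frac{k(P-k)}{P^2(P-1)}\right] = \frac{k(P-k)}{P-1}\,\sigma_a^2.
\end{equation*}
Dividing by $k^2$ gives $\frac{\sigma_a^2}{k}\cdot\frac{P-k}{P-1}$, which is the claim. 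The mean is $\E[\frac1k\sum_i Z_i a_i] = \frac1P\sum_i a_i = 0$, so no mean-correction term enters.

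The main point needing care is the negative covariance term and the algebra that combines the two pieces. Two checks guard against errors there. At $k=P$ the variance must vanish, since then the sample is the whole population and its mean is $0$. At $k=1$ it must equal $\sigma_a^2$, since a single uniform draw has variance $\sigma_a^2$. Both checks hold for the formula above. The lemma also needs $P \ge 2$ so that $\frac{P-k}{P-1}$ is well defined; for $P=1$ the variance is trivially zero.
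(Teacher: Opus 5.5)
Your proposal is correct and follows essentially the same route as the paper's proof: indicator variables $Z_i$, the same variance and negative covariance under sampling without replacement, the centering constraint to rewrite the cross sum as $-P\sigma_a^2$, and division of $\Var(T)$ by $k^2$. Your sum over $i\neq j$ is just the paper's $2\sum_{i<j}$, and your boundary checks at $k=1$ and $k=P$ match the paper's closing remark on the FPC factor.
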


\begin{proof}
Define indicator variables $Z_i \in \{0,1\}$ for $i = 1, \ldots, P$, where $Z_i = 1$ if and only if element $i$ is selected in the sample $S$. The sample sum is $T = \sum_{i=1}^P Z_i a_i$, and the sample mean is $\bar{a}_S = T/k$.

Under simple random sampling without replacement (SRSWOR) of size $k$ from $P$ elements, each element is selected with probability $\Pr(Z_i = 1) = k/P$, so:
\begin{equation}
\Var(Z_i) = \frac{k}{P}\!\left(1 - \frac{k}{P}\right) = \frac{k(P{-}k)}{P^2}
\end{equation}
For $i \neq j$, the joint selection probability is $\Pr(Z_i {=} 1,\, Z_j {=} 1) = \frac{k(k-1)}{P(P-1)}$, giving:
\begin{align}
\Cov(Z_i, Z_j)
  &= \E[Z_i Z_j] - \E[Z_i]\E[Z_j] \nonumber \\
  &= \frac{k(k{-}1)}{P(P{-}1)} - \frac{k^2}{P^2}
  = -\frac{k(P{-}k)}{P^2(P{-}1)}
\end{align}

Since $T = \sum_i Z_i a_i$ and the $a_i$ are constants, expanding the variance gives:
\begin{align}
&\Var(T) \nonumber \\
&= \sum_{i=1}^P a_i^2\,\Var(Z_i) + 2\!\sum_{i<j} a_i a_j\,\Cov(Z_i,Z_j) \nonumber \\
&= \frac{k(P{-}k)}{P^2}\sum_i a_i^2
   - \frac{k(P{-}k)}{P^2(P{-}1)}\cdot 2\!\sum_{i<j} a_i a_j
   \label{eq:varT_expand}
\end{align}

We now simplify each factor. By definition, $\sum_i a_i^2 = P\sigma_a^2$. For the cross-term, the centering constraint $\sum_i a_i = 0$ implies:
\begin{equation}
\begin{split}
0 = \Bigl(\sum_i a_i\Bigr)^{\!2} = \sum_i a_i^2 + 2\!\sum_{i<j} a_i a_j \\
\implies 2\!\sum_{i<j} a_i a_j = -P\sigma_a^2
\end{split}
\end{equation}

Substituting both identities into~\eqref{eq:varT_expand} (the two negatives in the second term yield a positive contribution):
\begin{align}
\Var(T)
  &= \frac{k(P{-}k)}{P^2}\cdot P\sigma_a^2
     + \frac{k(P{-}k)}{P^2(P{-}1)}\cdot P\sigma_a^2 \nonumber \\
  &= \frac{k(P{-}k)\,\sigma_a^2}{P{-}1}
\end{align}

Finally, dividing by $k^2$:
\begin{equation}
\Var(\bar{a}_S) = \frac{\Var(T)}{k^2}
  = \frac{\sigma_a^2}{k}\cdot\frac{P{-}k}{P{-}1} \qedhere
\end{equation}
\end{proof}

The FPC factor equals 1 when $k=1$ and 0 when $k=P$ (offsets cancel exactly). Applying to judge biases ($a_\ell = \gamma_\ell$, $P = \Ktot$, $k = K$):
\begin{equation}
\Var(\bar{\gamma}) = \frac{\sigma_\gamma^2}{K} \cdot \frac{\Ktot - K}{\Ktot - 1}
\end{equation}

Combining Terms (I) and (II) completes the proof.

\section{Estimation of Variance Components}
\label{app:estimation}

Because judges are crossed (not nested), we use a crossed ANOVA.

\paragraph{Residual noise.} The residual mean square estimates $\sigma_\varepsilon^2$:
\begin{align}
\hat{\sigma}_\varepsilon^2 &= \MS_W \nonumber \\
&= \frac{\sum_{i,j,\ell}(X_{ij\ell} - \bar{X}_{ij\cdot} - \bar{X}_{\cdot\cdot\ell} + \bar{X}_{\cdots})^2}{(nm - 1)(K - 1)}
\end{align}

\paragraph{Generation variance.} The between-generation mean square:
\begin{equation}
\MS_G = \frac{K}{n(m-1)}\sum_{i,j}(\bar{X}_{ij\cdot} - \bar{X}_{i\cdot\cdot})^2
\end{equation}
with $\E[\MS_G] = \sigma_\varepsilon^2 + K\sigma_\beta^2$. Judge bias drops out because the same judges evaluate every generation within a scenario.

\paragraph{Scenario variance.} The between-scenario mean square:
\begin{equation}
\MS_S = \frac{mK}{n-1}\sum_i(\bar{X}_{i\cdot\cdot} - \bar{X}_{\cdots})^2
\end{equation}
with $\E[\MS_S] = \sigma_\varepsilon^2 + K\sigma_\beta^2 + mK\,\sigma_\alpha^2$.

\paragraph{Method-of-moments estimators.}
\begin{align}
\hat{\sigma}_\beta^2 &= (\MS_G - \MS_W) / K \\
\hat{\sigma}_\alpha^2 &= (\MS_S - \MS_G) / (mK)
\end{align}

\paragraph{Judge bias.} We estimate $\hat{\gamma}_\ell = \bar{X}_{\cdot\cdot\ell} - \bar{X}_{\cdots}$ and compute:
\begin{equation}
\hat{\sigma}_\gamma^2 = \frac{1}{\Ktot}\sum_\ell \hat{\gamma}_\ell^2 - \frac{\hat{\sigma}_\varepsilon^2}{nm} \cdot \frac{\Ktot - 1}{\Ktot}
\end{equation}

\section{Allocation Strategy Derivations}
\label{app:allocation}

We derive the benchmark score variance for each strategy. The $n$ scenarios are fixed, so $\sigma_\alpha^2/n$ contributes a constant offset and is excluded.

\paragraph{Strategy A: All $\Ktot$ judges per generation.}
Budget $B = m \Ktot$, so $m = B / \Ktot$. Per-scenario mean:
\begin{align}
\bar{X}_i &= \mu_\theta + \alpha_i + \frac{1}{m}\sum_j \beta_{ij} + \underbrace{\frac{1}{\Ktot}\sum_\ell \gamma_\ell}_{=\, 0} \nonumber \\
&\quad + \frac{1}{m\Ktot}\sum_{j,\ell} \varepsilon_{ij\ell}
\end{align}
The judge offsets sum to zero by the centering constraint, so the score equals the panel-mean estimand. The benchmark variance from generation and residual noise:
\begin{equation}
V_A = \frac{1}{n}\!\left(\frac{\sigma_\beta^2}{m} + \frac{\sigma_\varepsilon^2}{m\Ktot}\right) = \frac{\Ktot \sigma_\beta^2 + \sigma_\varepsilon^2}{nB}
\end{equation}

\paragraph{Strategy B: Random single judge per generation.}
Budget $B = m$, each generation scored by $\ell_j \sim \text{Uniform}(\{1, \ldots, \Ktot\})$. Per-scenario mean:
\begin{equation}
\bar{X}_i = \mu_\theta + \alpha_i + \frac{1}{m}\sum_j \bigl(\beta_{ij} + \gamma_{\ell_j} + \varepsilon_{ij\ell_j}\bigr)
\end{equation}
Each term in the sum is independent with variance $\sigma_\beta^2 + \sigma_\gamma^2 + \sigma_\varepsilon^2$:
\begin{equation}
V_B = \frac{\sigma_\beta^2 + \sigma_\gamma^2 + \sigma_\varepsilon^2}{nB}
\end{equation}

\paragraph{Strategy C: Round-robin cycling.}
Budget $B = m$ (multiple of $\Ktot$), generation $j$ scored by judge $j \bmod \Ktot$. Each judge appears $m/\Ktot$ times, so the panel-relative offsets average to zero:
\begin{equation}
\frac{1}{m}\sum_{j=1}^m \gamma_{j \bmod \Ktot} = \frac{1}{\Ktot} \sum_{\ell=1}^{\Ktot} \gamma_\ell = 0
\end{equation}
The residuals for different $j$ are independent with variance $\sigma_\varepsilon^2$:
\begin{equation}
V_C = \frac{\sigma_\beta^2 + \sigma_\varepsilon^2}{nB}
\end{equation}

\paragraph{Pairwise comparisons.} Direct subtraction yields:
\begin{align}
V_A - V_C &= \frac{(\Ktot - 1)\sigma_\beta^2}{nB} \geq 0 \\
V_B - V_C &= \frac{\sigma_\gamma^2}{nB} \geq 0 \\
V_A - V_B &= \frac{(\Ktot - 1)\sigma_\beta^2 - \sigma_\gamma^2}{nB}
\end{align}

\section{Scenario Coverage}
\label{app:scenario_coverage}

The strategies above fix $n$ and optimize the per-scenario budget. A complementary question is how to split a total generation budget $B_{\text{gen}} = nm$ between scenarios and generations per scenario. Under cycling, substituting $n = B_{\text{gen}}/m$ into Eq.~\ref{eq:strat_cycle} gives:
\begin{equation}
\label{eq:scenario_gen}
V_C(B_{\text{gen}}, m) = \frac{m\,\sigma_\alpha^2 + \sigma_\beta^2 + \sigma_\varepsilon^2}{B_{\text{gen}}}
\end{equation}
This is linear in $m$ with positive slope $\sigma_\alpha^2/B_{\text{gen}}$, so variance is minimized at $m = 1$, maximizing the number of distinct scenarios up to the dataset size $N$. The same monotonicity holds for Strategies~A and~B. Once the scenario pool is exhausted ($n = N$), excess budget should go to additional generations.

\section{Exact Bootstrap Predictions}
\label{app:pool_pred}

The allocation formulas in \S\ref{sec:allocation} assume infinite-population generation draws. When validating via bootstrap subsampling from a finite pool of $m_{\max}$ observed generations, we compute exact predictions using the empirical pool variances. For all three strategies, the subsampling observations within each scenario are i.i.d.\ draws from the relevant pool, so $V(B) = C/(nB)$ where $C$ is the pool variance.

\paragraph{Strategy A.} Each draw is a generation-level mean $\bar{X}_{ig\cdot}$, with $m = B/\Ktot$ draws per scenario. The pool variance is:
\begin{equation}
C_A = \Ktot \cdot \frac{1}{n}\sum_i \frac{1}{m_{\max}}\sum_g (\bar{X}_{ig\cdot} - \bar{X}_{i\cdot\cdot})^2
\end{equation}

\paragraph{Strategy B.} Each draw is a single cell $X_{ig\ell}$ with generation and judge chosen uniformly at random, with $m = B$ draws per scenario:
\begin{equation}
C_B = \frac{1}{n}\sum_i \frac{1}{m_{\max}\Ktot}\sum_{g,\ell} (X_{ig\ell} - \bar{X}_{i\cdot\cdot})^2
\end{equation}

\paragraph{Strategy C.} Each judge $\ell$ gets $B/\Ktot$ independent draws from its column; the per-scenario mean averages these $\Ktot$ independent sub-means. The relevant pool variance is the within-column generation variance:
\begin{equation}
C_C = \frac{1}{n\Ktot}\sum_{i,\ell} \frac{1}{m_{\max}}\sum_g (X_{ig\ell} - \bar{X}_{i\cdot\ell})^2
\end{equation}

These empirical constants converge to the population formulas (Eqs.~\ref{eq:strat_all}--\ref{eq:strat_cycle}) as $m_{\max} \to \infty$, but for finite $m_{\max}$ they account for the bias in plug-in variance estimates (the $\nicefrac{(m_{\max}{-}1)}{m_{\max}}$ correction) and the fact that the crossed ANOVA residual uses global judge centering while $C_C$ uses within-scenario-judge centering.

\section{ANOVA \texorpdfstring{$F$}{F}-Test for Judge Effects}
\label{app:anova}

We test for a systematic judge effect using a two-way crossed ANOVA that treats each scenario--generation pair as a subject crossed with $K$ judges. The judge sum of squares $\text{SS}_J = N \sum_{\ell=1}^{K}(\bar{X}_{\cdot\cdot\ell} - \bar{X}_{\cdots})^2$ (where $N = nm$) is compared to the residual $\text{SS}_R = \text{SS}_T - \text{SS}_J - \text{SS}_S$, yielding $F = \text{MS}_J / \text{MS}_R$ with $K{-}1$ and $(N{-}1)(K{-}1)$ degrees of freedom. The $p$-value is the survival function of the $F$-distribution, computed via \texttt{scipy.stats.f.sf}. Table~\ref{tab:anova} reports results for both benchmarks; in all cases, the null hypothesis of no judge effect is rejected at $p < 0.001$.

\begin{table}[ht]
\centering
\caption{ANOVA \texorpdfstring{$F$}{F}-statistic for the judge main effect. All \texorpdfstring{$p < 0.001$}{p < 0.001}.}
\label{tab:anova}
\begin{tabular}{@{}lcc@{}}
\toprule
& \textbf{MT-Bench} & \textbf{MindEval} \\
\textbf{Model} & $F(4,\,3196)$ & $F(4,\,996)$ \\
\midrule
Qwen 2.5 7B       & 638.2 & 1029.5 \\
Llama 3.3 70B      & 445.8 & 1280.1 \\
GPT-5.2            & 601.7 & 200.1 \\
Gemini 3 Flash     & 659.2 & 458.0 \\
Claude Sonnet 4.6  & 514.9 & 788.9 \\
\bottomrule
\end{tabular}
\end{table}

\section{Omitted Interactions}
\label{app:interactions}

A full factorial design with three facets (scenario, generation, judge) has $2^3 = 8$ terms: a grand mean, three main effects, three two-way interactions, and one three-way interaction. Our model (Eq.~\ref{eq:model}) retains the grand mean and three main effects, absorbing all interaction terms into the residual $\varepsilon_{ij\ell}$. We enumerate each omitted interaction below.

\paragraph{Scenario--generation.} Because generations are nested within scenarios rather than crossed with them, there is no separable scenario--generation interaction: any such effect is already confounded with the generation main effect $\beta_{ij}$, which captures all within-scenario generation variability by construction.

\paragraph{Scenario--judge.} This interaction captures the extent to which a judge's bias varies across scenarios (e.g., a judge being lenient on reasoning tasks but strict on creative ones). It is in principle estimable, but explicitly modeling it would add $n(\Ktot{-}1)$ parameters and complicate the closed-form variance decomposition without changing the allocation analysis: the interaction enters only through the per-cell residual variance, which our empirical predictions (\S\ref{sec:experiments}) already capture without parametric assumptions on its structure.

\paragraph{Generation--judge.} This interaction reflects judge-specific reactions to particular generation artifacts (e.g., one judge penalizing verbose outputs more than another). Since generations are independent stochastic draws, this term is exchangeable across $j$ and acts as additional i.i.d.\ noise on each cell, making it naturally absorbed into $\sigma_\varepsilon^2$.

\section{Judge Hyperparameter Settings}
\label{app:hyperparameters}

For all judge calls we follow the default decoding hyperparameters specified by the MT-Bench and MindEval reference implementations, summarized in Table~\ref{tab:hyperparameters}. For models in our judge pool that fall outside the original API tables of either reference implementation (e.g., Gemini 3 Flash and Claude Sonnet 4.6 for MT-Bench), we apply the closest equivalent setting supported by the provider.

\begin{table}[ht]
\centering
\caption{Default judge decoding hyperparameters used by each benchmark's reference implementation.}
\label{tab:hyperparameters}
\smallskip
\begin{tabular}{@{}ll@{}}
\toprule
\textbf{Parameter} & \textbf{Default} \\
\midrule
\multicolumn{2}{@{}l}{\emph{MT-Bench}} \\
\quad temperature & $0$ \\
\quad max\_tokens (OpenAI) & $2048$ \\
\quad max\_tokens (Anthropic) & $1024$ \\
\midrule
\multicolumn{2}{@{}l}{\emph{MindEval}} \\
\quad judge\_template\_version & \texttt{v0\_1} \\
\quad max\_completion\_tokens & $16{,}000$ \\
\quad reasoning\_effort & \texttt{high} \\
\bottomrule
\end{tabular}
\end{table}

\paragraph{Effect of hyperparameters.}
Decoding hyperparameters such as temperature, top-$p$, and reasoning effort primarily control the within-judge stochasticity of a single scoring call. In the model of Eq.~\ref{eq:model}, this stochasticity is absorbed into the residual $\varepsilon_{ij\ell}$, contributing to $\sigma_\varepsilon^2$. The systematic offset $\gamma_\ell$, by contrast, is a property of the judge model itself, not of how its outputs are sampled. Increasing temperature would inflate $\sigma_\varepsilon^2$ but, to first order, leave $\gamma_\ell$ and $\sigma_\gamma^2$ unchanged. At the default operating point, the judge-offset contribution exceeds the residual contribution by roughly two orders of magnitude on both benchmarks (Table~\ref{tab:components}). Reasonable perturbations of the decoding hyperparameters can only shift $\sigma_\varepsilon^2$ within a window that is much smaller than $\sigma_\gamma^2$, so the qualitative ranking of allocation strategies in \S\ref{sec:allocation} and Figure~\ref{fig:strategy} is robust to this choice.

\end{document}